\newtheorem{proposition}{Proposition}
\newtheorem{theorem}{Theorem}
\newtheorem{lemma}{Lemma}
\newtheorem{remark}{Remark}
\newtheorem{definition}{Definition}
\newcommand{\eg}{{\emph{e.g.}}}
\title{Exploring Diverse Representations for Open Set Recognition}
\author{
    Yu Wang,
    Junxian Mu,
    Pengfei Zhu\thanks{Corresponding author},
    Qinghua Hu
}
\begin{document}
\maketitle
\begin{abstract}
Open set recognition (OSR) requires the model to classify samples that belong to closed sets while rejecting unknown samples during test. Currently, generative models often perform better than discriminative models in OSR, but recent studies show that generative models may be computationally infeasible or unstable on complex tasks. In this paper, we provide insights into OSR and find that learning supplementary representations can theoretically reduce the open space risk. Based on the analysis, we propose a new model, namely Multi-Expert Diverse Attention Fusion (MEDAF), that learns diverse representations in a discriminative way. MEDAF consists of multiple experts that are learned with an attention diversity regularization term to ensure the attention maps are mutually different. The logits learned by each expert are adaptively fused and used to identify the unknowns through the score function. We show that the differences in attention maps can lead to diverse representations so that the fused representations can well handle the open space. Extensive experiments are conducted on standard and OSR large-scale benchmarks. Results show that the proposed discriminative method can outperform existing generative models by up to 9.5\% on AUROC and achieve new state-of-the-art performance with little computational cost. Our method can also seamlessly integrate existing classification models. Code is available at https://github.com/Vanixxz/MEDAF.
\end{abstract}

\section{Introduction}
Humans can realize what they have not learned, which provides a valuable basis for actively formulating questions and seeking information~\cite{markman1979realizing}. But this is quite challenging for artificial intelligence models: they will mistakenly identify samples of an unknown class as a known class, thereby preventing them from having the ability to think. To this end, open set recognition (OSR) is proposed to enable the models to recognize unknown samples while correctly identifying known samples~\cite{scheirer2013open}.

Preliminary attempts on OSR have been made, in which existing methods can be categorized as discriminative methods and generative methods. Discriminative methods transform OSR into a classical discriminative task by imitating the open space using available data or features, such as explicitly estimating the probability of a test unknown sample~\cite{bendale2016towards}, assigning placeholders to predict the distribution of unknown data~\cite{zhou2021learning}, and utilizing supervised contrastive learning to improve the quality of representations~\cite{xu2023contrastive}. These methods are simple and robust in various scenarios and easy to scale to large-scale tasks, but their performance is often limited due to the difficulty of modeling the open space. By contrast, generative methods encode known classes into various latent distributions using generative models and identify unknowns by measuring their distance to known distributions, such as mapping the latent known classes to prototypes~\cite{yang2022convolutional}, Gaussian distributions, and manifolds~\cite{oza2019c2ae}. These methods have shown promising results and achieved state-of-the-art performance.

Despite effectiveness, generative methods are shown to be computationally infeasible or performatively unstable on complex tasks. \cite{xu2023contrastive} pointed out that training generative models significantly increases the total training cost of the recognition system, and \cite{vaze2022openset} found it infeasible to train generative methods on large-scale benchmarks, \emph{e.g.,} ImageNet. Moreover, \cite{guo2021conditional} and \cite{huang2023class} empirically proved that generative methods suffer from significant performance degradation in recognizing unknowns when unknown classes are similar to known classes.

The aforementioned analysis raises a straightforward question: Is it possible to find the rationale of existing methods to design OSR models that are effective and robust as well as efficient and scalable?

In this paper, we first provide insights into OSR and theoretically find a key factor of generative methods is learning diverse representations that contain additional information to discriminative representations for identifying known classes. Subsequently, we propose a simple discriminative method that learns diverse representations using only vanilla discrimination models. Inspired by the mixture-of-experts model, we design multiple experts that share shallow layers and own expert-independent layers. To collaboratively learn diverse representations, the attention maps learned by the experts are constrained to be different using a diversity regularization term. Finally, the logits of each expert are adaptively fused by a gating network. The fused logits are used to classify knowns after the Softmax function as well as identifying unknowns in the score function. We show that the differences in attention maps can lead to diverse representations and the fused representations can reduce the open space risk, thereby boosting the ability to identify unknowns.

Extensive experiments on standard and large-scale OSR benchmarks demonstrate that the proposed method outperforms existing discriminative and generative models by up to 9.5\% on AUROC and achieves new state-of-the-art performance. MEDAF is quite simple to implement and can seamlessly integrate existing classification models. The contributions of this study can be summarized as follows:
\begin{itemize}
\item We provide theoretical insights into OSR and find that learning diverse representations is the key factor in reducing the open space risk. 
\item We propose a simple yet effective method that learns multiple experts by constraining the learned attention map of each expert to be mutually different and fuses diverse representations to identify knowns and unknowns.
\item Experiments on standard and large-scale benchmarks demonstrate that the proposed method shows clear advantages over baselines with little computational cost.
\end{itemize}

\section{Related Works}

\subsection{Discriminative OSR Methods}
Since \cite{scheirer2013open} provided a mathematical definition for OSR, \cite{hendrycks2016a} proposed a deep learning method that negates maximum softmax probability to reject unknowns. \cite{bendale2016towards} demonstrated the instability of Softmax probabilities and designed an OpenMax function as a replacement for Softmax. \cite{zhou2021learning} treated placeholders as representative points to address overconfidence in predictions for unknown classes. \cite{xu2023contrastive} utilized supervised contrastive learning to enhance the efficacy of representations. 

Discriminative methods are efficient and scalable, but their performance is often inferior to generative methods.

\subsection{Generative OSR Methods}
\textbf{Generation for known classes.}
\cite{oza2019c2ae} proposed a two-stage approach, which trains an encoder for closed-set recognition and then adds class-conditional information to train a decoder for unknown detection. \cite{chen2020learning} developed RPL to identify if a sample is known or unknown based on its deviation from reciprocal points. \cite{sun2020open} used a variational autoencoder (VAE) for constraining different latent space features to fit different Gaussian models for unknown detection. \cite{guo2021conditional} combined VAE and capsule network to fit a predefined distribution, promoting consistency of features within the class. \cite{yang2022convolutional} proposed GCPL, which replaces the Softmax classifier with an open-world-oriented prototype model.

\noindent\textbf{Generation for unknown information.}
\cite{neal2018open} enriched the training dataset by generating samples similar to training examples but not belonging to any specific classes with GANs \cite{ian2014gan}. Based on RPL, \cite{chen2022adversarial} proposed ARPL, which enhances the ability of the model to differentiate by generating misleading samples. \cite{kong2022opengan} expanded the training set to improve the recognition of unknown samples using generated samples and auxiliary datasets. \cite{moon2022difficulty} considered generating samples of different difficulty levels to improve the robustness of the model. \cite{liu2023opentext} proposed a label-to-prototype mapping function to construct prototypes for both known classes and unknown classes.

Generative methods presume and learn different distributions for known classes or synthesize pseudo-unknowns around the known samples. Recent works show outstanding performance but are shown to be unreliable in complex tasks, e.g., the unknowns being similar to the knowns, and computationally infeasible in large-scale datasets.

\section{Delving Deep into Open Set Recognition}
\cite{scheirer2013open} aimed to find the optimal solution in the hypothesis space for OSR tasks by minimizing both empirical and open space risks. Since obtaining precise information about unknown samples during training is impossible, we analyze the potential risks on the test set to measure the performance of model. 

Misclassifying samples in the known test set $\mathcal{D}_T^K$ increases closed-set risk $\mathcal{R}_c$ while accepting from the unknown set $\mathcal{D}_T^U$ induces open space risk $\mathcal{R}_o$. To measure the performance of model $f(\theta)$ with training set $\mathcal{D}_K=\{(\boldsymbol{x}_i, y_i)\}^{N_K}_{i=1},\ y_i\in\{1,...,K\}$, we denote the test set as $\mathcal{D}_T=\mathcal{D}_T^K\cup\mathcal{D}_T^U$, the label of unknown samples as $U$, and the proportion of unknown samples as $\alpha$, the potential risk $\mathcal{R}_T\big(\mathcal{D}_T,f(\theta)\big)$ can be formulated as
\begin{equation}
\begin{aligned}
    \mathcal{R}_T=(1-\alpha)\cdot\mathcal{R}_c\big(\mathcal{D}^K_T,f(\theta)\big)+\alpha\cdot\mathcal{R}_o\big(\mathcal{D}^U_T,f(\theta)\big).  
\label{eq:base_target}
\end{aligned}
\end{equation}

Closed-set risk $\mathcal{R}_c$ can be approximated by the empirical risk $\mathcal{R}_\epsilon$ using $\mathcal{D}_K$. We can transform $\mathcal{R}_c$ into $\mathcal{R}_\epsilon$ during training. Mathematically, $\mathcal{R}_\epsilon$ can be formulated as Eq. \eqref{eq:emp_risk} considering the cross-entropy loss function:

\begin{equation}
\begin{aligned}
\mathcal{R}_c\big(\mathcal{D}^K_T,f(\theta)\big)&=\mathcal{R}_\epsilon\big(\mathcal{D}_K,f(\theta)\big)\\&=\frac1{N_K}\sum^{N_K}_{i=1}\mathcal{L}_{ce}(\boldsymbol{p}_i,\hat{\boldsymbol{p}}_i)\\&=\frac1{N_K}\sum^{N_K}_{i=1}-{\rm log}\,p(\hat{y}_i=y_i|\boldsymbol{z}_i),
\label{eq:emp_risk}
\end{aligned}
\end{equation}
where $\boldsymbol{z}_i$ denotes the global representation.

A common way to reject an unknown sample $(\boldsymbol{x}_u,y_u)\in\mathcal{D}_T^U$ is to measure the probability that the model predicts it belongs to any known class $k$ is higher than threshold $\tau$: 

\begin{equation}
p(\hat{y}_u=U|\boldsymbol{z}_u)={\rm max}_{k\in K}p(\hat{y}'_u=k|\boldsymbol{z}_u),
\label{eq:prob_unknown}
\end{equation}

\begin{equation}
\hat{y}_u=\left\{\begin{aligned}  
        &k\ \ \quad{\rm{if}}\ p(\hat{y}_u=U|\boldsymbol{z})\geq\tau\\  
        &U\,\,\quad{\rm{if}}\ p(\hat{y}_u=U|\boldsymbol{z})<\tau  
    \end{aligned}\right..
\label{eq:rej_unknown}
\end{equation}
Accordingly, the sample $(\boldsymbol{x}_u,y_u)$ will be wrongly accepted if $p(\hat{y}_u=U|\boldsymbol{z}_u)\leq\tau$.
Obviously, the open space risk of wrongly accepting an unknown sample $(\boldsymbol{x}_u,y_u)$ will be reduced if the $(\boldsymbol{x}_u,y_u)$'s probabilities of belonging to all the known classes decrease, making
$p(\hat{y}_u=U|\boldsymbol{z})$ increased. Therefore, Eq. \eqref{eq:base_target} can be formulated as follows:

\begin{proposition}  
With the test set $\mathcal{D}_T$ and the model $f(\theta)$, as $p(\hat{y}=y|\boldsymbol{z})$ increases, the potential risk $\mathcal{R}_T$ decreases.
\end{proposition}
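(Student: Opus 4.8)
The plan is to split $\mathcal R_T$ into its two summands via Eq.~\eqref{eq:base_target}, show that each is (weakly) decreasing in the probability the model assigns to the ground-truth label of a test point, and then recombine using the nonnegative weights $1-\alpha$ and $\alpha$. I read the hypothesis ``$p(\hat y=y|\boldsymbol z)$ increases'' pointwise: for a known test point $\boldsymbol z$ it is $p(\hat y=y|\boldsymbol z)$ with $y$ its true known class, while for an unknown point $\boldsymbol z_u$ it is the unknown-confidence $p(\hat y_u=U|\boldsymbol z_u)$, which by Eq.~\eqref{eq:prob_unknown} rises exactly when the point's probabilities on all known classes fall, as noted just before the proposition. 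Everything else about $f(\theta)$ and $\mathcal D_T$ is held fixed, so the uniform version of the statement follows by summing the pointwise one.

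\textbf{Closed-set term.} By Eq.~\eqref{eq:emp_risk}, $\mathcal R_c\big(\mathcal D_T^K,f(\theta)\big)=\frac1{N_K}\sum_i -\log p(\hat y_i=y_i|\boldsymbol z_i)$. Since $t\mapsto-\log t$ is strictly decreasing on $(0,1]$, raising any single $p(\hat y_i=y_i|\boldsymbol z_i)$ strictly lowers the corresponding summand and hence $\mathcal R_c$; if all of them are raised, $\mathcal R_c$ decreases strictly.

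\textbf{Open-space term and recombination.} Following the decision rule of Eqs.~\eqref{eq:prob_unknown}--\eqref{eq:rej_unknown}, I take $\mathcal R_o\big(\mathcal D_T^U,f(\theta)\big)$ to be the fraction of unknown test points that are wrongly accepted, $\mathcal R_o=\frac1{N_U}\sum_u \mathbf 1\!\left[\,p(\hat y_u=U|\boldsymbol z_u)\le\tau\,\right]$, which matches the characterization of a wrong acceptance stated just above the proposition. For each fixed point the map $s\mapsto\mathbf 1[s\le\tau]$ is non-increasing in $s$, so raising $p(\hat y_u=U|\boldsymbol z_u)$ can only turn a wrongly-accepted unknown into a correctly-rejected one and never the reverse; hence $\mathcal R_o$ is non-increasing (and strictly decreasing once some score crosses $\tau$). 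Because $\alpha\in[0,1]$ makes both $1-\alpha$ and $\alpha$ nonnegative, $\mathcal R_T=(1-\alpha)\mathcal R_c+\alpha\mathcal R_o$ is a nonnegative combination of non-increasing quantities and is therefore itself non-increasing in the ground-truth-label probabilities --- strictly decreasing whenever $1-\alpha>0$ and at least one known-class probability is raised --- which is the claim.

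\textbf{Expected main obstacle.} The difficulty is conceptual, not computational: the proposition becomes literally correct only after one (i) disambiguates ``$p(\hat y=y|\boldsymbol z)$'' so that on $\mathcal D_T^U$ it refers to the unknown-confidence $p(\hat y_u=U|\boldsymbol z_u)$ and not to a known-class score, and (ii) fixes a definition of $\mathcal R_o$ tied to the threshold rule in Eqs.~\eqref{eq:prob_unknown}--\eqref{eq:rej_unknown}. With these in place the proof reduces to the monotonicity of $-\log$ and of a threshold indicator; the only subtlety is to keep track of strict versus weak decrease (strict for $\mathcal R_c$, only weak in general for $\mathcal R_o$, since a small perturbation of a score need not cross $\tau$).
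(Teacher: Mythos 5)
Your argument is correct and follows the same skeleton as the paper's: split $\mathcal{R}_T$ via Eq.~\eqref{eq:base_target}, observe that the closed-set term is a sum of $-\log p(\hat y_i=y_i|\boldsymbol z_i)$ and hence strictly decreasing in those probabilities, and then argue the open-space term is also monotone. The one genuine divergence is in how the open-space risk is formalized. The paper does not use your threshold-indicator acceptance rate; in Eq.~\eqref{eq:potential_risk} it simply \emph{defines} $\mathcal{R}_o=1-\frac{1}{N_u}\sum_u p(\hat y_u=U|\boldsymbol z_u)$, a linear surrogate in the unknown-confidence, after which the monotonicity (and the whole ``proof,'' which in the paper is a single sentence following the display) is immediate and strict. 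Your version, $\mathcal{R}_o=\frac1{N_U}\sum_u\mathbf 1[\cdot]$, is closer to the operational decision rule of Eq.~\eqref{eq:rej_unknown} but, as you correctly note, only yields weak monotonicity since scores need not cross $\tau$; the paper's linear surrogate sidesteps that subtlety at the cost of no longer measuring actual acceptance errors. Your preliminary disambiguation of ``$p(\hat y=y|\boldsymbol z)$'' on $\mathcal{D}_T^U$ as the unknown-confidence is exactly what Eq.~\eqref{eq:potential_risk} implicitly assumes, and is needed because the paper's own Eq.~\eqref{eq:prob_unknown} (which equates that quantity with a \emph{maximum over known classes}) and the surrounding text pull in opposite directions; you resolved the inconsistency the same way the paper's risk decomposition does, so no gap results, but it is worth being aware that under the literal reading of Eq.~\eqref{eq:prob_unknown} the open-space term would be monotone in the wrong direction.
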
 

\begin{equation}
\begin{aligned}
\mathcal{R}_T=&(1-\alpha)\cdot\overbrace{\Big(\frac1{N_K}\sum^{N_K}_{i=1}-{\rm log}\,p(\hat{y}_i=y_i|\boldsymbol{z}_i)\Big)}^{Closed-set\ risk\ \mathcal{R}_c}\\&+\alpha\cdot\underbrace{\Big(1-\frac{1}{N_u}\sum_{u=1}^{N_u}p(\hat{y}_u=U|\boldsymbol{z}_u)\Big)}_{Open\ space\ risk\ \mathcal{R}_o}.
\label{eq:potential_risk}
\end{aligned}
\end{equation}

According to Eq. \eqref{eq:potential_risk}, mitigating potential risks $\mathcal{R}_T$ requires enhancing $p(\hat{y}=y|\boldsymbol{z})$.
\begin{lemma} \cite{cover2012elements} 
For a pair of random variables $(X,Y)\sim p(x,y)$, the conditional entropy $H(Y|X)$ given $X=x$ is:
\begin{equation}
H(Y|X=x)=-\sum_{y\in\mathcal{Y}}p(y|x){\rm log}\ p(y|x).
\label{eq:cond_entro}
\end{equation}
\end{lemma}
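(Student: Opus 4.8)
The assertion is, in essence, the \emph{definition} of the Shannon entropy of $Y$ conditioned on the single outcome $X=x$, so rather than a substantive argument the plan is to exhibit it as a specialization of the general entropy functional. First I would check that, for any $x$ with $p(x)>0$, the function $y\mapsto p(y|x)=p(x,y)/p(x)$ is a genuine probability mass function on $\mathcal{Y}$: it is nonnegative, and $\sum_{y\in\mathcal{Y}}p(y|x)=\tfrac{1}{p(x)}\sum_{y\in\mathcal{Y}}p(x,y)=\tfrac{p(x)}{p(x)}=1$ by marginalization of the joint law. This legitimizes speaking of ``the entropy of $Y$ given the event $X=x$.''

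Next I would invoke the definition of the entropy of a discrete distribution $q$, namely $H(q)=-\sum_{z}q(z)\log q(z)$ with the convention $0\log 0=0$, and apply it with $q(\cdot)=p(\cdot\,|\,x)$. This gives $H(Y|X=x)=-\sum_{y\in\mathcal{Y}}p(y|x)\log p(y|x)$ verbatim, completing the derivation. For continuity with the rest of the analysis I would also record that averaging over $x$ recovers the usual conditional entropy $H(Y|X)=\sum_{x}p(x)\,H(Y|X=x)=-\sum_{x,y}p(x,y)\log p(y|x)$, which is the form used when linking entropy to the recognition probability $p(\hat{y}=y\,|\,\boldsymbol{z})$ in the subsequent development.

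There is no real obstacle here: the only points needing a word are the degenerate case $p(x)=0$, where the conditional law is undefined and the corresponding term is dropped by convention, and the boundary case $p(y|x)=0$, handled by $0\log 0=0$. Both are standard, which is precisely why the statement is cited to \cite{cover2012elements} rather than proved in detail; my ``proof'' amounts to unwinding these definitions and verifying the normalization step above.
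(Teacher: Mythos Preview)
Your proposal is correct and aligns with the paper's treatment: the paper does not prove this lemma at all but simply cites it to \cite{cover2012elements} as a definition, exactly as you anticipated. Your unwinding of the entropy functional applied to the conditional pmf $p(\cdot\,|\,x)$, together with the standard conventions for $p(x)=0$ and $0\log 0$, is the appropriate justification and there is nothing further to add.
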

During the process of training models using cross-entropy, minimizing the conditional entropy between the true labels and the learned representations can optimize the value of cross-entropy. For binary classification between known and unknown, $p(\hat{y}=y|\boldsymbol{z})$ is negatively correlated with $H(y|\boldsymbol{z})$. Thus, we can reduce open space risk by decreasing $H(y|\boldsymbol{z})$, so the quality of given $\boldsymbol{z}$ affects the model's performance. However, DNNs often use a limited set of highly distinguishable features for classification, which is called ``shortcut learning" \cite{geirhos2020shortcut}. This may lead to confusion between unknowns and knowns when they share similar features learned by DNNs. Several researchers tried to enable the model to form more comprehensive attention,~\cite{wang2018deep} proposed to leverage multi-scale features. We mathematically define two kinds of representations and analyze the impact of them on the OSR task. 

\begin{definition}
\textsc{Basic Discriminative Representations} is the representations $\boldsymbol{M}_d$ captured by the baseline model $g(\psi)$ and further be utilized to obtain global representation $\boldsymbol{z}'$, which is formalized by:
\begin{equation}
g(\boldsymbol{x},\psi)=\boldsymbol{z}' \wedge\ I(y;\boldsymbol{z}')\approx I(y;\boldsymbol{M}_d).
\label{eq:defin_fd}
\end{equation}
\end{definition}

\begin{definition}
\textsc{Supplementary Discriminative Representations} is the representations $\boldsymbol{M}_s$ of class-relevant regions $r_s$ aside from the focal area $r_d$ is formalized by:
\begin{equation}
I(y;\boldsymbol{M}_s)>0\ \wedge\ r_s\cap r_d=\phi,
\label{eq:defin_fs}
\end{equation}
where $\boldsymbol{M}$ denotes the feature map obtained with CNNs and $\phi$ indicates there is no overlap between $r_s$ and $r_d$, while the representations of the focal area are denoted as $\boldsymbol{M}_d$. 
\end{definition}

Based on the above definitions, we have the following theorem.
\begin{theorem}
    With $(\boldsymbol{x},y)$ and its global representation $\boldsymbol{z}$, the conditional entropy of $y$ and $\boldsymbol{z}$ with $\boldsymbol{M}_d$ and additionally utilizing $\boldsymbol{M}_s$ is expressed as $H(y|\boldsymbol{z}'),H(y|\boldsymbol{z})$, satisfying $H(y|\boldsymbol{z}')-H(y|\boldsymbol{z})=\zeta>0$.
\label{thm:1}
\end{theorem}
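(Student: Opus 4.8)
The plan is to express the global representation $\boldsymbol{z}$ as the one obtained by combining the basic discriminative feature map $\boldsymbol{M}_d$ with the supplementary map $\boldsymbol{M}_s$, whereas $\boldsymbol{z}'$ is built from $\boldsymbol{M}_d$ alone; then invoke the chain rule and the data-processing inequality for mutual information to compare the two conditional entropies. Concretely, I would first record the elementary identity $H(y\mid\boldsymbol{z}) = H(y) - I(y;\boldsymbol{z})$ (and likewise for $\boldsymbol{z}'$), so that the claimed gap becomes $\zeta = H(y\mid\boldsymbol{z}') - H(y\mid\boldsymbol{z}) = I(y;\boldsymbol{z}) - I(y;\boldsymbol{z}')$. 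Thus it suffices to show $I(y;\boldsymbol{z}) > I(y;\boldsymbol{z}')$, i.e. that adjoining $\boldsymbol{M}_s$ strictly increases the information the representation carries about the label.

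Next I would use Definition~1, which posits $I(y;\boldsymbol{z}')\approx I(y;\boldsymbol{M}_d)$, to reduce the comparison to feature maps: since $\boldsymbol{z}$ is a function of the pair $(\boldsymbol{M}_d,\boldsymbol{M}_s)$, the data-processing inequality gives $I(y;\boldsymbol{z})\le I\big(y;(\boldsymbol{M}_d,\boldsymbol{M}_s)\big)$, and for the fusion architecture one argues this is in fact an equality (no label-relevant information is discarded by the fusion/pooling step, which is the same assumption already made for $\boldsymbol{z}'$ and $\boldsymbol{M}_d$). Then the chain rule yields
\begin{equation}
I\big(y;(\boldsymbol{M}_d,\boldsymbol{M}_s)\big) = I(y;\boldsymbol{M}_d) + I(y;\boldsymbol{M}_s\mid\boldsymbol{M}_d),
\label{eq:chainrule}
\end{equation}
so that $\zeta = I(y;\boldsymbol{M}_s\mid\boldsymbol{M}_d)$. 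It remains to show this conditional mutual information is strictly positive. Here I would lean on Definition~2: $I(y;\boldsymbol{M}_s)>0$ together with the spatial disjointness $r_s\cap r_d=\phi$, which is meant to encode that $\boldsymbol{M}_s$ describes class-relevant regions \emph{not} already covered by $\boldsymbol{M}_d$, hence $\boldsymbol{M}_s$ is not a deterministic function of $\boldsymbol{M}_d$ and retains label information even after conditioning on $\boldsymbol{M}_d$; formalizing this, $I(y;\boldsymbol{M}_s\mid\boldsymbol{M}_d)>0$, and setting $\zeta$ equal to this quantity finishes the argument.

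The main obstacle is the middle step: the spatial disjointness $r_s\cap r_d=\phi$ by itself does not logically force $I(y;\boldsymbol{M}_s\mid\boldsymbol{M}_d)>0$, because $\boldsymbol{M}_d$ could in principle already determine $y$ (in which case the conditional term vanishes even though $I(y;\boldsymbol{M}_s)>0$). So the proof must make precise the modelling assumption implicit in ``shortcut learning''---namely that $\boldsymbol{M}_d$ alone does not saturate the label information, equivalently $H(y\mid\boldsymbol{M}_d)>0$---and then combine it with disjointness to conclude that the genuinely new region $r_s$ contributes a strictly positive conditional term. I would state this non-saturation as an explicit hypothesis, give the short information-theoretic chain above, and flag the approximation $I(y;\boldsymbol{z}')\approx I(y;\boldsymbol{M}_d)$ from Definition~1 as the place where the ``$\zeta>0$'' is asserted rather than derived with a quantitative lower bound.
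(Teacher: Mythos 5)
Your proposal follows essentially the same route as the paper's proof: the identity $H(y|\boldsymbol{z})=H(y)-I(y;\boldsymbol{z})$ reducing the claim to $\zeta=I(y;\boldsymbol{z})-I(y;\boldsymbol{z}')$, the identification of $I(y;\boldsymbol{z}')$ with $I(y;\boldsymbol{M}_d)$ and $I(y;\boldsymbol{z})$ with $I(y;\boldsymbol{M}_d,\boldsymbol{M}_s)$, the chain rule giving $\zeta=I(y;\boldsymbol{M}_s|\boldsymbol{M}_d)$, and the lemma that conditional mutual information is nonnegative with equality iff conditional independence. The non-saturation caveat you flag (needing $H(y|\boldsymbol{M}_d)>0$, since disjointness of $r_s$ and $r_d$ alone does not rule out $\boldsymbol{M}_d$ already determining $y$) is a genuine looseness that the paper's own proof also leaves implicit, asserting only that $I(y;\boldsymbol{M}_s)>0$ and the regions being different suffice to break the equality condition.
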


\begin{proof}
According to Eq. \eqref{eq:cond_entropy}, as $H(y)$ is a constant, the difference between conditional entropy $H(y|\boldsymbol{z})$ and $H(y|\boldsymbol{z}')$ is determined by the difference of mutual information term $I(y;\boldsymbol{z})$ and $I(y;\boldsymbol{z}')$.
\begin{equation}
\begin{aligned}
H(y|\boldsymbol{z}')-H(y|\boldsymbol{z})&=H(y)-I(y;\boldsymbol{z}')-H(y)+I(y;\boldsymbol{z})
\\&=I(y;\boldsymbol{z})-I(y;\boldsymbol{z}').
\label{eq:cond_entropy}
\end{aligned}
\end{equation}

\begin{figure*}[t]
  \begin{center}
  \includegraphics[width=0.8\linewidth]{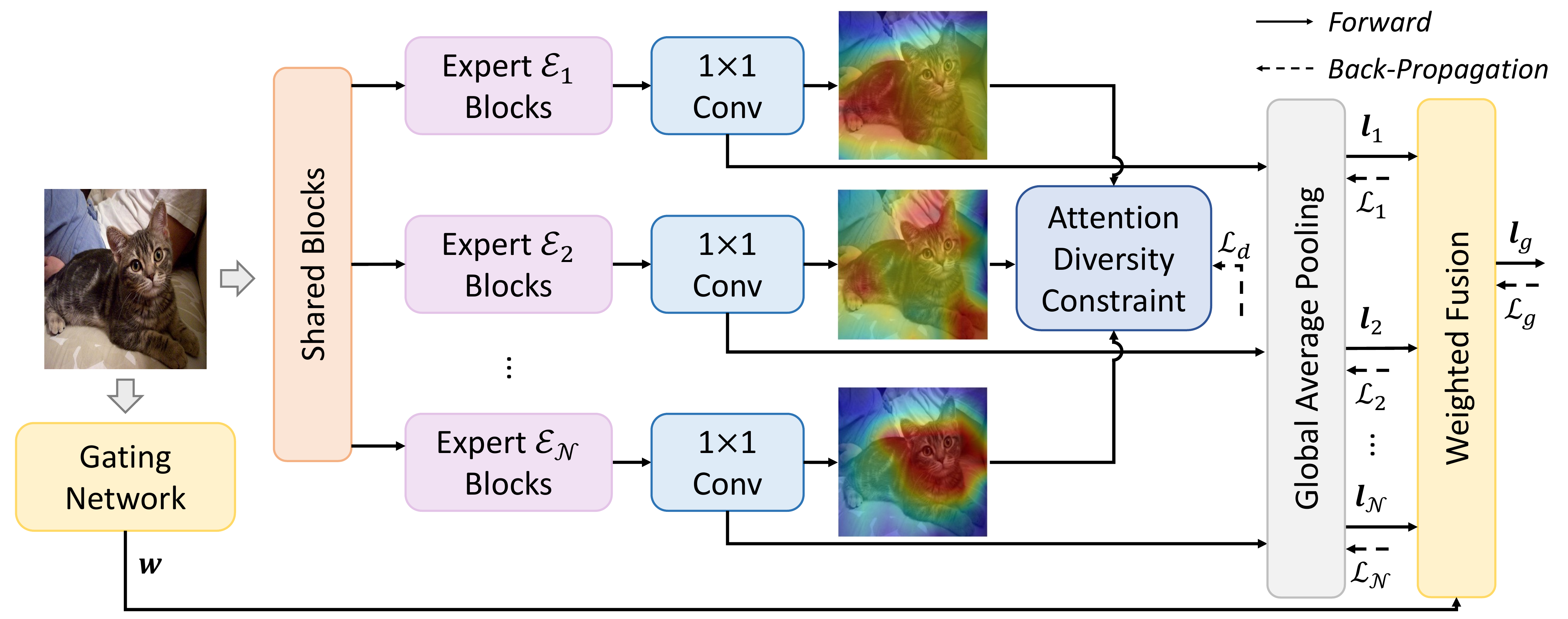}
    \caption{Illustration of the proposed MEDAF method. MEDAF consists of a multi-expert feature extractor to explore diverse representations by constraining the learned attention map of each expert to be mutually different. Then a gating network adaptively generates weights to integrate expert-independent predictions.}
    \label{fig:Fig_arch}
    \end{center}
\end{figure*}

Since $\boldsymbol{z}'$ only considering $\boldsymbol{M}_d$, $I(y;\boldsymbol{z}')$ is equivalent to $I(y;\boldsymbol{M}_d)$, while $I(y;\boldsymbol{z})$ is equivalent to $I(y;\boldsymbol{M}_d,\boldsymbol{M}_s)$, and the difference is
\begin{equation}
\begin{aligned}
I(y;\boldsymbol{z})&-I(y;\boldsymbol{z}')=I(y;\boldsymbol{M}_d,\boldsymbol{M}_s)-I(y;\boldsymbol{M}_d)\\&=I(y;\boldsymbol{M}_d)+I(y;\boldsymbol{M}_s|\boldsymbol{M}_d)-I(y;\boldsymbol{M}_d)
\\&=I(y;\boldsymbol{M}_s|\boldsymbol{M}_d).
\label{eq:mutual_info}
\end{aligned}
\end{equation}

$I(y;\boldsymbol{M}_s|\boldsymbol{M}_d)$ represents the reduction in conditional entropy from the knowledge of $\boldsymbol{M}_s$ when $\boldsymbol{M}_d$ is given, and it is significant to discuss its value.
\begin{lemma} \cite{cover2012elements} 
For any three random variables $X,Y$ and $Z$,
\begin{equation}
I(X;Y|Z)\geq0,
\label{eq:info_geq}
\end{equation}
with equality if and only if $X$ and $Y$ are conditionally independent given $Z$.
\end{lemma}

As $\boldsymbol{M}_s$ is related with class $y$ and $\boldsymbol{M}_d,\ \boldsymbol{M}_s$ are representations of different regions, the equality condition of Eq. \eqref{eq:info_geq} does not hold, so we can derive that $I(y;\boldsymbol{M}_s|\boldsymbol{M}_d)>0$, which indicates that introducing $\boldsymbol{M}_s$ can reduce the open space risk.
\end{proof}

As the model gives lower prediction probabilities on other non-corresponding classes with $\boldsymbol{M}_s$, it can predict a low probability of unknown samples belonging to any known class, thereby constraining the open space risk that induced by considering these samples as known.

\subsubsection{Connections to existing methods.} 
The above finding can explain the working mechanisms of some existing generative methods, \emph{i.e.,} exploring diverse representations $\boldsymbol{M}_s$ in addition to $\boldsymbol{M}_d$:
\begin{itemize}
    \item Prototype or auto-encoder based methods learn $\boldsymbol{M}_s$ by mapping it to prototypical points or manifolds. But prototypes cannot provide much information of $\boldsymbol{M}_s$ and even under learn $\boldsymbol{M}_d$ due to the limited representation ability, and AEs may learn representations that are unrelated to classification~\cite{huang2023class}.
    \item Unknown sample generation methods enable $\boldsymbol{M}_s$ by learning to distinguish knowns and pseudo unknowns. However, the generated unknowns are unreliable due to limited information, thereby learning unreliable $\boldsymbol{M}_s$~\cite{kong2022opengan}.
\end{itemize}

To learn reliable $\boldsymbol{M}_s$ flexibly, we design a new model that obtains $\boldsymbol{M}_s$ through fusing different representations learned by multiple local networks to output the final prediction, which can seamlessly build upon existing classification models and further enable them exploring diverse representations. 

\section{Multi-Expert Diverse Attention Fusion}
\subsection{Multi-Expert Architecture}
Inspired by the mixture of experts proposed by \cite{shazeer2017moe}, 
we design multiple experts that share parameters in shallow layers and have individual parameters in deep layers. Each expert can specialize in capturing specific semantic features. 

As shown in Figure~\ref{fig:Fig_arch}, we set multi-experts represented as $\{\mathcal{E}_i\}_{i=1}^\mathcal{N}$. Given a training sample $(\boldsymbol{x}, y)$, it first passes through the shared shallow layer to produce an intermediate representation with more detail, then is fed to various experts, which capture different discriminative regions. 

\begin{remark}
According to the specific demand and backbone, the number of expert-independent blocks and experts can be flexibly adjusted.
\end{remark}

\subsection{Learning Mutually Diverse Representations}
The key to our method is to ensure multiple experts learn different representations. Simply constraining the learned representations to be different may lead to learning unnecessary information, so we use class activation mapping (CAM) \cite{zhou2016learning} that highlights the most critical regions in an image contributing to class $y$. 

Concretely, global averaging pooling (GAP) outputs global representation $\boldsymbol{z}$ by calculating the average value of each pixel $(h,w)$ in feature map $\boldsymbol{M}$. By replacing the \textbf{GAP-Linear} with \textbf{1$\times$1 Conv-GAP}, we can obtain the feature map $\boldsymbol{M}\in\mathbb{R}^{[K,H,W]}$ and the global representation $\boldsymbol{z}\in\mathbb{R}^{[K,1,1]}$. Given label $y$, we can obtain the corresponding feature map $\boldsymbol{M}_y$, which is equivalent to CAM. With feature $\boldsymbol{M}$ and its global representation $\boldsymbol{z}$, the GAP process can be formally described as Eq. \eqref{eq:gap}, where $\delta$ denotes values on the nonsignificant regions (\eg, background).

\begin{equation}
\begin{aligned}
\boldsymbol{z}&={\rm GAP}(\boldsymbol{M})=\frac{1}{|\boldsymbol{M}|}\sum_{(h,w)}{\boldsymbol{M}^{(h,w)}}\\&=\underbrace{\frac{\lambda_d}{|\boldsymbol{M}_d|}\sum_{(h,w)}{\boldsymbol{M}^{(h,w)}_d}+\frac{\lambda_s}{|\boldsymbol{M}_s|}\sum_{(h,w)}{\boldsymbol{M}^{(h,w)}_s}}_{Class-relevant\ representations}+\delta\\&=\lambda_d*{\rm GAP}(\boldsymbol{M}_d)+\lambda_s*{\rm GAP}(\boldsymbol{M}_s)+\delta.
\label{eq:gap}
\end{aligned}
\end{equation}

In addition to class-independent low-activation regions $\delta$, regions with high activation values (\eg, $\boldsymbol{M}_d$) in the CAM represent the location that the model focuses on, while regions with low activation values (\eg, $\boldsymbol{M}_s$) are considered as supplementary representations. By forming diversity in $\boldsymbol{M}_d$ and $\boldsymbol{M}_s$ of each pair of experts, we can constrain differences in the regions that experts focus on and combine them to develop a comprehensive representation of the classification. We zero activation values lower than the mean $\mu$ to filter useless regions. Let $\boldsymbol{M}\mathrm{'}_{y}^{i}$ denotes the processed CAM on expert $\mathcal{E}_i$, and it can be calculated as
\begin{equation}
\boldsymbol{M}\mathrm{'}_{y}^{i}={\rm{ReLU}}(\boldsymbol{M}_{y}^{i}-\mu).
\label{eq:cam_relu}
\end{equation}  

We perform the cosine similarity between the processed CAM in a pairwise manner. The computed similarities are summed to compute the regularization loss $\mathcal{L}_d$ for diversified expert attention, formulated as
\begin{equation}
\mathcal{L}_d=\sum_{i=1}^{\mathcal{N}-1}\sum_{j=i+1}^{\mathcal{N}}\frac{\boldsymbol{M}\mathrm{'}_{y}^{i}\cdot\boldsymbol{M}\mathrm{'}_{y}^{j}}{\big|\big|\boldsymbol{M}\mathrm{'}_{y}^{i}\big|\big|_2*\big|\big|\boldsymbol{M}\mathrm{'}_{y}^{j}\big|\big|_2}.
\label{eq:diver_loss}
\end{equation}  

\begin{table*}[ht]
\centering
\small
\begin{tabular}{cccccc}
\hline
Method & SVHN & CIFAR10 & CIFAR+10 & CIFAR+50 & Tiny-ImageNet \\ \hline
Softmax (ICLR'16)& 0.886 & 0.677 & 0.816 & 0.805 & 0.577 \\ 
C2AE (CVPR'19)& 0.892 & 0.711 & 0.810 & 0.803 & 0.581 \\ 
CGDL (CVPR'20)& 0.896 & 0.681 & 0.794 & 0.794 & 0.653 \\ 
RPL (ECCV'20)& 0.931 & 0.784 & 0.885 & 0.881 & 0.711 \\ 
PROSER (CVPR'21)& 0.930 & 0.801 & 0.898 & 0.881 & 0.684 \\ 
CVAECap (ICCV'21)& 0.956 & 0.835 & 0.888 & 0.889 & 0.715 \\ 
ARPL (TPAMI'22)& 0.946 & 0.819 & 0.904 & 0.901 & 0.710 \\ 
NAS-OSR (AAAI'22)& 0.949 & 0.843 & 0.840 & 0.871 & - \\ 
DIAS (ECCV'22)& 0.943 & 0.850 & 0.920 & 0.916 & 0.731 \\ \hline
MEDAF & \textbf{0.957} & \textbf{0.860} & \textbf{0.960} & \textbf{0.955} & \textbf{0.800} \\ \hline
\end{tabular}
\caption{Comparison of different methods on unknown detection tasks using AUROC. All results are the average value, and the best performance values are in bold.}
\label{table_auroc}
\end{table*}

\subsection{Expert Fusion}To integrate predictions of different experts, we use a gating network to adaptively generate weights of experts. The gating network can consider expert-independent characteristics and relevance and allow fine-grained control over the contribution of each expert to the final result.

We adopt an identical feature extractor to those employed in the main network as the backbone of the gating network. Two fully connected layers are incorporated at the top of the backbone, culminating in deriving expert prediction weights. Let $\boldsymbol{l}_i\in\mathbb{R}^{K}$ denotes the logits from expert $\mathcal{E}_i$, and $\boldsymbol{w}\in\mathbb{R}^{\mathcal{N}}$ represent the weights given by gating network. The final logits $\boldsymbol{l}_g$ is calculated as follows:
\begin{equation}
\boldsymbol{l}_g=\sum_{i=1}^{\mathcal{N}}\boldsymbol{w}_i*\boldsymbol{l}_i,
\label{eq:gating_lg}
\end{equation}
where $\boldsymbol{l}_i$ denotes the logits of the $i$-th expert $\mathcal{E}_i$ while $\boldsymbol{l}_g$ denotes the integrated logits.

The overall loss function can be decomposed into global and expert-wise cross-entropy loss terms (denoted by $\mathcal{L}_{ce}^g$ and $\mathcal{L}_{ce}^i$) and the attention diversity regularization term $\mathcal{L}_d$, which can be formulated as
\begin{equation}
\mathcal{L}=\mathcal{L}_{ce}^g+\beta_1*\sum_{i=1}^{\mathcal{N}}\mathcal{L}_{ce}^i+\beta_2*\mathcal{L}_d,
\label{eq:loss_full}
\end{equation}
where $\beta_1$ and $\beta_2$ denote the scaling factors. The cross-entropy loss is formulated as
\begin{equation}
\mathcal{L}_{ce}=\sum_k-\boldsymbol{y}_k*{\rm log}\big({\rm Softmax}(\boldsymbol{l})^k\big).
\label{eq:cross_perb}
\end{equation}

\subsection{Rejecting Unknown Samples}
According to open space risk $\mathcal{R}_o$ in Eq. \eqref{eq:prob_unknown}, the probability of an unknown sample being correctly rejected is inversely proportional to the mutual information between its feature and any known class $k$. Unknown samples that share few discriminative features with known class samples are more likely to confuse the model \cite{moon2022difficulty}.

By averaging multiple effective feature maps, we obtain a more comprehensive feature map that weakens the impact of a few similar features. The value $\mathcal{S}_{ft}(\boldsymbol{x})$ is obtained through the L2 normalization of the averaged features:
\begin{equation}
\mathcal{S}_{ft}(\boldsymbol{x})=\Big|\Big|\frac{1}{\mathcal{N}}{\sum_{i=1}^\mathcal{N}\boldsymbol{M}_i}\Big|\Big|_2.
\label{eq:score_ft}
\end{equation}

To avoid degrading the original numerical difference with maximum softmax probability to reject unknowns, 
we use the maximum value of the logits $\boldsymbol{l}_g$ as a term $\mathcal{S}_{lg}(\boldsymbol{x})$ in $\mathcal{S}(\boldsymbol{x})$ and obtain the final result by linearly combining the above two terms according to Eq. \eqref{eq:score_func}, where $\gamma$ is the weights of each term. When rejecting unknown samples, we use the $\mathcal{S}(\boldsymbol{x})$ value that ensures 95\% of known samples are accepted as the threshold $\tau$. According to Eq. \eqref{eq:rej_unknown}, samples with scores lower than $\tau$ are considered unknown samples.
\begin{equation}
\mathcal{S}(\boldsymbol{x})=\mathcal{S}_{lg}(\boldsymbol{x})+\gamma*\mathcal{S}_{ft}(\boldsymbol{x}).
\label{eq:score_func}
\end{equation}

\section{Experiments}
\subsection{Implementation Details}
In experiments, we used ResNet18~\cite{he2016deep} as the backbone. In terms of optimization, we used an SGD optimizer with a momentum value of 0.9 and set the initial learning rate to 0.1 with a fixed batch size of 128 for 150 epochs. For all the compared methods, we directly used the reported results if the settings were the same or the reproduced results using their official code if the settings were different. For methods that have not released their source code, we only reported the results of experiments they have done.
\begin{table*}[t]
\centering
\small
\begin{tabular}{ccccccccc}
    \hline
    \multirow{2}[4]{*}{Method}&\multicolumn{4}{c}{In:CIFAR10 / Out:CIFAR100} & \multicolumn{4}{c}{In:CIFAR10 / Out:SVHN} \\
    \cline{2-5} \cline{6-9}
    & DTACC & AUROC & AUIN  & AUOUT & DTACC & AUROC & AUIN  & AUOUT \\ \hline
    Softmax (ICLR'16)& 0.798 & 0.863 & 0.884 & 0.825 & 0.864 & 0.906 & 0.883 & 0.936 \\ 
    RPL (ECCV'20)& 0.806 & 0.871 & 0.888 & 0.838 & 0.871 & 0.920 & 0.896 & 0.951 \\ 
    CSI (NeuIPS'20)& 0.844 & 0.916 & 0.925 & 0.900 & 0.928 & 0.979 & 0.962 & 0.990 \\ 
    GCPL (TPAMI'22)& 0.802 & 0.864 & 0.866 & 0.841 & 0.861 & 0.913 & 0.866 & 0.948 \\ 
    ARPL (TPAMI'22)& 0.834 & 0.903 & 0.911 & 0.884 & 0.916 & 0.966 & 0.948 & 0.980 \\ \hline
    MEDAF & \textbf{0.854} & \textbf{0.925} & \textbf{0.932} & \textbf{0.911} & \textbf{0.953} & \textbf{0.991} & \textbf{0.980} & \textbf{0.996} \\ \hline
\end{tabular}
\caption{The performance of multiple methods on out-of-distribution task. Regarding CIFAR10, CIFAR100 and SVHN are considered as near- and far- out-of-distribution datasets, respectively.}
\label{table_ood}
\end{table*}

\begin{table}[!ht]
\centering
\small
\setlength\tabcolsep{3pt}
\begin{tabular}{ccccc}
    \hline
    Method & IMGN-C & IMGN-R & LSUN-C & LSUN-R  \\ \hline
    Softmax (ICLR'16)& 0.639 & 0.653 & 0.642 & 0.647  \\ 
    CROSR (CVPR'19)& 0.721 & 0.735 & 0.720 & 0.749  \\ 
    C2AE (CVPR'19)& 0.837 & 0.826 & 0.783 & 0.801  \\ 
    GFROSR (CVPR'20)& 0.757 & 0.792 & 0.751 & 0.805  \\ 
    CGDL (CVPR'20)& 0.840 & 0.832 & 0.806 & 0.812  \\ 
    PROSER (CVPR'21)& 0.849 & 0.824 & 0.867 & 0.856  \\ 
    ConOSR (AAAI'23)& 0.891 & 0.843 & 0.912 & 0.881  \\ \hline
    MEDAF & \textbf{0.915} & \textbf{0.900} & \textbf{0.922} & \textbf{0.926}  \\ \hline
\end{tabular}
\caption{The macro-F1 results on the CIFAR-10 with various unknown datasets.}
\label{table_f1}
\end{table}
\subsection{Unknown Detection}
In this part, the model requires identifying samples from classes not learned during training on each dataset. Following the setting of \cite{moon2022difficulty}, we conducted the experiments on five image datasets, including CIFAR10 \cite{krizhevsky09learning}, CIFAR+10, CIFAR+50, SVHN \cite{netzer2011reading}, and Tiny-ImageNet \cite{pouransari2014tiny}. The area under the receiver operating characteristic (AUROC) is a threshold-independent metric to measure the model's ability to distinguish knowns and unknowns.

\textbf{MEDAF has a clear advantage on challenging tasks with more unknown samples and high intra-similarity of classes.} 
To avoid unfair comparisons arising from different splits, we adopted unified split information with~\cite{moon2022difficulty,neal2018open} and can be found in the supplementary. Results in Table~\ref{table_auroc} demonstrate that MEDAF outperforms existing discriminative and generative methods on all datasets. It is worth noting that a significant improvement of 9.5\% is obtained on TinyImageNet compared with the recent generative method, which has high variability in object appearance, and some classes in it visually resemble others.

\subsection{Out-of-Distribution Detection}
For measuring the performance of MEDAF in identifying samples that deviate from the learned in-distribution, we conducted OOD detection experiments. Following the setup proposed in \cite{chen2020learning}, we employed CIFAR-100 and SVHN as near- and far-OOD datasets, with CIFAR-10 as the in-distribution dataset. We used the AUROC, DTACC, and AUPR to evaluate the performance. DTACC refers to the highest probability of classification across various thresholds. AUPR measures the performance in a data-imbalanced scenario, and AUIN and AUOUT denote the AUPR value when in- or out-of-distribution samples are positive.

\textbf{MEDAF can well handle near-OOD samples.} Under the settings of near-OOD and far-OOD, the model requires to distinguish between samples with slight or significant differences with in-distribution data. Therefore, the former has stronger requirements for the discriminative ability of model and higher difficulty. Results in Table~\ref{table_ood} show that MEDAF achieves comparable performance with SOTA methods. It is worth noting that MEDAF exhibits superior capabilities when dealing with near-OOD task, demonstrating it can effectively reject samples similar to in-distribution samples, thereby having greater practical significance.

\subsection{Open Set Recognition}
To evaluate the model's ability to classify on closed-set and detect samples in the open space, we conducted a $K+1$ class classification experiment.

Following the protocol introduced by \cite{yoshihashi2019classification}, we used the entire CIFAR10 as the knowns and use processed images derived from LSUN \cite{yu2015lsun} or ImageNet \cite{deng2009imagenet} as unknown samples. We adopted the macro-averaged F1-score as the metric, calculated by averaging the F1-scores of $10+1$ classes.

\textbf{MEDAF can accurately identify both known and unknown samples.} Results in Table~\ref{table_f1} demonstrate that, compared to recently proposed methods, we improve the macro-F1 up to 6.7\%. When faced with different unknown samples, MEDAF has stable $K+1$ classification performance, thus validating the effectiveness of the design in enhancing the distinguishing capability.

\begin{figure}[t]
    \begin{center}
    \includegraphics[width=0.9\linewidth]{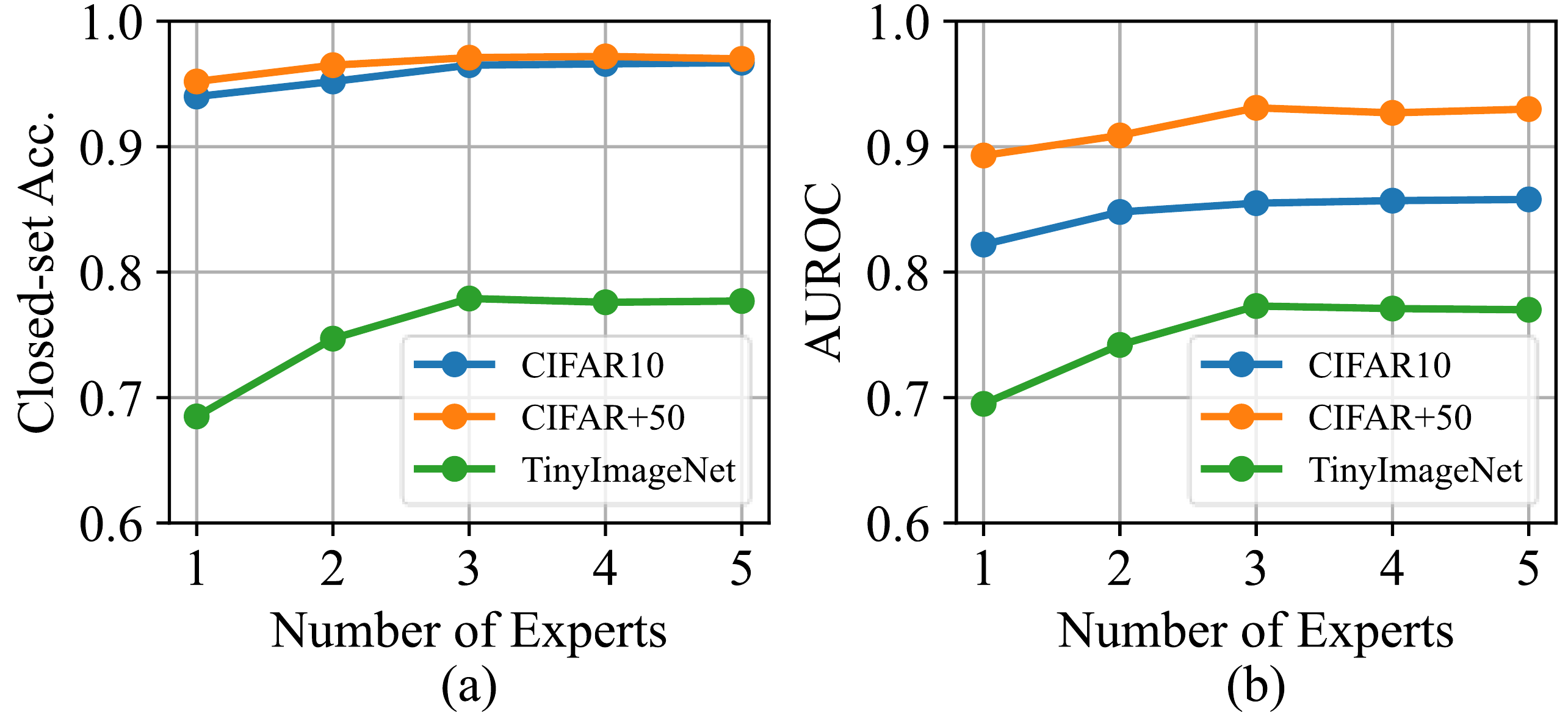}
    \caption{Performance with different expert numbers on multiple datasets, with (a) recording closed-set accuracy and (b) recording AUROC.}
    \label{fig:Fig_expert}
    \end{center}
\end{figure}

\begin{table}[t]
\centering
\small
\setlength\tabcolsep{3.5pt}
\begin{tabular}{ccccccc}
\hline
$\mathcal{L}_d$ & Gating & Acc. &IMGN-R &IMGN-F &LSUN-R &LSUN-F \\ \hline
- & - &  0.936 & 0.944 & 0.915 & 0.952 & 0.901 \\ 
\checkmark & - &  0.950 & 0.977 & 0.951 & 0.987 & 0.946 \\
\checkmark & \checkmark &  \textbf{0.954} & \textbf{0.983} & \textbf{0.953} & \textbf{0.990}  & \textbf{0.950} \\ \hline
\end{tabular}
\caption{The ablation results on loss term and gating network. From top to down, each row represents the results of using a single expert's prediction, adding diverse attention loss, and using average prediction, using weights generated by the gating network.}
\label{table_attention}
\end{table}

\subsection{Ablation Study}
\subsubsection{Fusion on a few experts can achieve outstanding performance.} As shown in Figure~\ref{fig:Fig_expert}, we tested the closed-set classification accuracy and AUROC with different expert numbers. We found that the performance is initially better with more experts but then stabilizes or decreases, which indicates that expanding the model structure has diminishing returns, given the ability to extract comprehensive features.

\subsubsection{Diverse representations and adaptive weighting are both beneficial for OSR.} We conducted ablation experiments on the gating network and diversity loss, which helped the model explore and fuse diverse features. We used CIFAR10 as the known dataset, while variations of LSUN and ImageNet as the unknown datasets. Results in Table~\ref{table_attention} demonstrate that the gating network improves the performance of the model compared to using prediction from a single expert or a simple combination. Moreover, the diversity loss effectively enhances the model’s ability in both known class classification and unknown class detection compared to the baseline.
\begin{table}[t]
\small
\centering
\begin{tabular}{cccc}
\hline
Method & AUROC & TNR@TPR95 & DTACC \\ \hline 
Softmax & 0.797 & 0.448 & 0.735 \\ 
GCPL & 0.823 & - & - \\ 
ARPL & 0.836 & 0.486 & 0.782 \\  \hline
MEDAF & \textbf{0.928} & \textbf{0.583} & \textbf{0.867} \\\hline
\end{tabular}
\caption{Comparison on ImageNet-1k on different metrics.}
\label{table_imgn1k}
\end{table}

\begin{figure}[t]
  \begin{center}
    \includegraphics[width=0.75\linewidth]{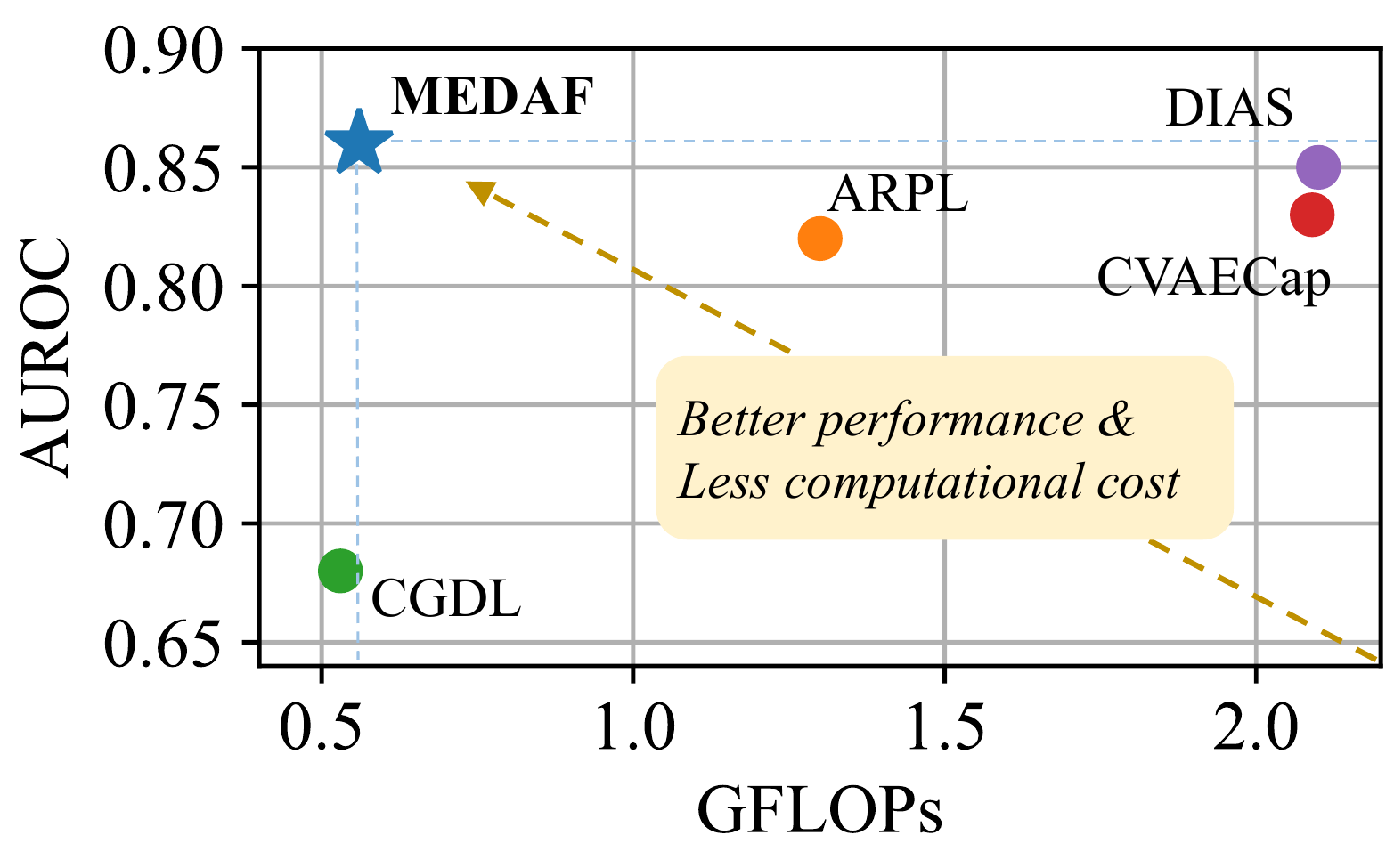}
    \caption{OSR performance against computational cost.}
    \label{fig:Fig_gflops}
    \end{center}
\end{figure}

\begin{table}[t]
\centering
\small
\begin{tabular}{cccc}
\hline
Method & CIFAR10 & CIFAR100 & Tiny-ImageNet \\ \hline 
Plain CNN & 0.940 & 0.716 & 0.637 \\ 
ARPL & 0.941 & 0.721 & 0.657  \\ 
ConOSR & 0.946 & 0.730 & 0.661 \\  \hline
MEDAF & \textbf{0.954} & \textbf{0.770} & \textbf{0.706}\\ \hline
\end{tabular}
\caption{Closed-set classification performance comparison.}
\label{table_close_acc}
\end{table}

\subsection{Further Analysis}
\subsubsection{Large-Scale Benchmark.}
To evaluate the scalability of different methods, we conducted experiments on ImageNet 1k~\cite{russakovsky2015imagenet}. We selected the first 100 classes as known classes, and the remaining 900 are unknown classes. Experimental results are recorded in Table~\ref{table_imgn1k}. MEDAF operates effectively even in challenging scenarios where more unknown samples appear. Furthermore, with these classes requiring detailed differentiation, MEDAF still has outstanding closed-set accuracy.

\subsubsection{Efficiency.}
We compared the computational cost of MEDAF and other methods on CIFAR10. Figure~\ref{fig:Fig_gflops} clearly shows that MEDAF achieves the best performance with little cost. By contrast, most generative models have high GFLOPs, which verifies that they are computationally expensive and difficult to scale to large-scale tasks.

\subsubsection{Closed-Set Accuracy.}
Following the setting in \cite{xu2023contrastive}, we trained the models on CIFAR10, CIFAR100, and the first 100 classes of TinyImagenet to test their closed-set accuracy. Table~\ref{table_close_acc} shows that MEDAF significantly outperforms other methods, especially when the number of classes and task difficulty increases, demonstrating great potential in both closed and open scenarios.

\begin{figure}[t]
  \begin{center}
    \includegraphics[width=0.7\linewidth]{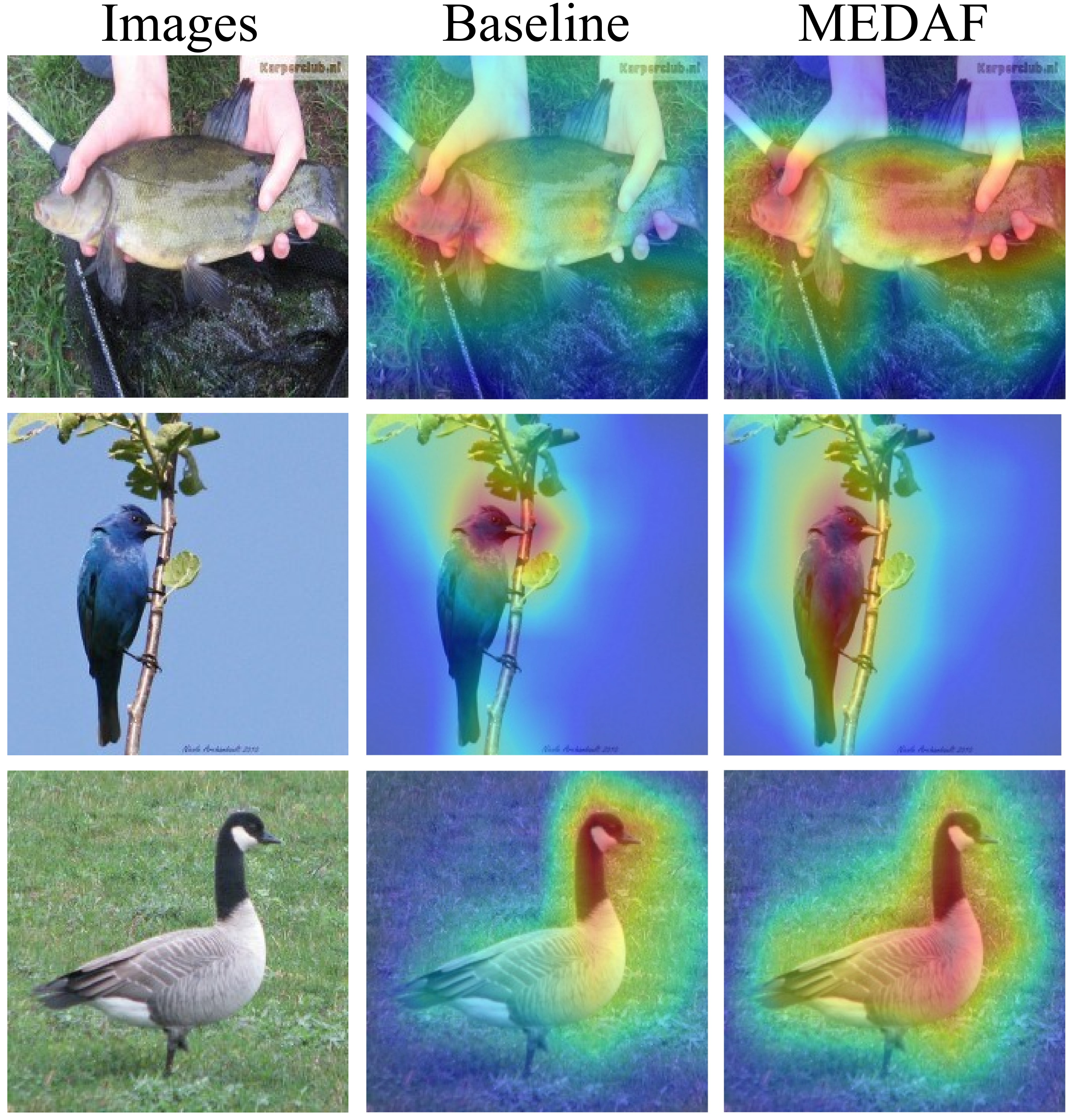}
    \caption{Visulizations on CAMs of baseline and MEDAF.}
    \label{fig:Fig_cam}
    \end{center}
\end{figure}
\subsubsection{Visualizations.}
Figure~\ref{fig:Fig_cam} shows attention visualizations of the baseline method (plain classification model) and the proposed MEDAF. It can be observed that the attention regions are notably more precise and comprehensive when compared to those of a plain classification model. MEDAF learns more diverse features, yielding effective unknown detection as well as robust known recognition.

\section{Conclusion}
In this paper, we analyzed the challenges OSR tasks and found that propose a novel method called Multi-Expert Diverse Attention Fusion (MEDAF). The key insight is that diverse representations that contain supplementary representations in addition to basic ones learned by baselines can effectively reduce the open space risk in OSR. To achieve the goal, MEDAF uses an architecture of multiple experts that share shallow layers while having expert-independent layers. An attention diversity regularization loss is proposed to ensure the learned attention map of each expert is mutually different. By adaptively fusing the logits learned by each expert with a feature-based score function, MEDAF can accurately identify and reject unknown samples. Experiments on both standard and large-scale benchmarks show that MEDAF significantly outperforms other methods with little computational cost. 

\bibliography{ref}
\end{document}